%%%%%%%% ICML 2023 EXAMPLE LATEX SUBMISSION FILE %%%%%%%%%%%%%%%%%

\documentclass{article}

% Recommended, but optional, packages for figures and better typesetting:
\usepackage{microtype}
\usepackage{graphicx}
\usepackage{subcaption}
\usepackage{booktabs} % for professional tables
\usepackage[normalem]{ulem}
\usepackage{pgfplots}
\usepackage{enumitem}
\usepackage{multirow}
\usepackage[hidelinks]{hyperref}

\usetikzlibrary{calc}

% hyperref makes hyperlinks in the resulting PDF.
% If your build breaks (sometimes temporarily if a hyperlink spans a page)
% please comment out the following usepackage line and replace
% \usepackage{icml2023} with \usepackage[nohyperref]{icml2023} above.
\usepackage{hyperref}

% Attempt to make hyperref and algorithmic work together better:

% Use the following line for the initial blind version submitted for review:
% \usepackage{icml2023}

% If accepted, instead use the following line for the camera-ready submission:
\usepackage[accepted]{icml2023}

% For theorems and such
\usepackage{amsmath}
\usepackage{amssymb}
\usepackage{mathtools}
\usepackage{amsthm}

\DeclareMathOperator{\partialt}{\partial_t}
\DeclareMathOperator{\diag}{diag}
\def\xit{\ensuremath{\mathbf X_i}}
\def\xt{\ensuremath{\mathbf X}}
\def\dxit{\ensuremath{\nabla_{\mathbf X_i}}}
\def\dxt{\ensuremath{\nabla_{\mathbf X}}}

% if you use cleveref..
\usepackage[capitalize,noabbrev]{cleveref}

%%%%%%%%%%%%%%%%%%%%%%%%%%%%%%%%
% THEOREMS
%%%%%%%%%%%%%%%%%%%%%%%%%%%%%%%%
\theoremstyle{plain}
\newtheorem{theorem}{Theorem}[section]
\newtheorem{proposition}[theorem]{Proposition}
\newtheorem{lemma}[theorem]{Lemma}

\theoremstyle{definition}

\theoremstyle{remark}

% Todonotes is useful during development; simply uncomment the next line
%    and comment out the line below the next line to turn off comments
%\usepackage[disable,textsize=tiny]{todonotes}
\usepackage[textsize=tiny]{todonotes}

% The \icmltitle you define below is probably too long as a header.
% Therefore, a short form for the running title is supplied here:
\icmltitlerunning{Entropy Aware Message Passing in Graph Neural Networks}

\begin{document}

\twocolumn[
\icmltitle{Entropy Aware Message Passing in Graph Neural Networks}

% It is OKAY to include author information, even for blind
% submissions: the style file will automatically remove it for you
% unless you've provided the [accepted] option to the icml2023
% package.

% List of affiliations: The first argument should be a (short)
% identifier you will use later to specify author affiliations
% Academic affiliations should list Department, University, City, Region, Country
% Industry affiliations should list Company, City, Region, Country

% You can specify symbols, otherwise they are numbered in order.
% Ideally, you should not use this facility. Affiliations will be numbered
% in order of appearance and this is the preferred way.
\icmlsetsymbol{equal}{*}

\begin{icmlauthorlist}
\icmlauthor{Philipp Nazari}{equal,eth}
\icmlauthor{Oliver Lemke}{equal,eth}
\icmlauthor{Davide Guidobene}{equal,eth}
\icmlauthor{Artiom Gesp}{equal,eth}
%\icmlauthor{}{sch}
%\icmlauthor{}{sch}
\end{icmlauthorlist}

\icmlaffiliation{eth}{Deep Learning, ETH Zurich, Switzerland}

\icmlcorrespondingauthor{Philipp Nazari}{pnazari@student.ethz.ch}
\icmlcorrespondingauthor{Oliver Lemke}{olemke@student.ethz.ch}
\icmlcorrespondingauthor{David Guidobene}{dguidobene@student.ethz.ch}
\icmlcorrespondingauthor{Artiom Gesp}{agesp@student.ethz.ch}

% You may provide any keywords that you
% find helpful for describing your paper; these are used to populate
% the "keywords" metadata in the PDF but will not be shown in the document
\icmlkeywords{Machine Learning, ICML}

\vskip 0.3in
]

% this must go after the closing bracket ] following \twocolumn[ ...

% This command actually creates the footnote in the first column
% listing the affiliations and the copyright notice.
% The command takes one argument, which is text to display at the start of the footnote.
% The \icmlEqualContribution command is standard text for equal contribution.
% Remove it (just {}) if you do not need this facility.

%\printAffiliationsAndNotice{}  % leave blank if no need to mention equal contribution
\printAffiliationsAndNotice{\icmlEqualContribution} % otherwise use the standard text.

\begin{abstract} Deep Graph Neural Networks struggle with oversmoothing.
This paper introduces a novel, physics-inspired GNN model designed to mitigate this issue.
Our approach integrates with existing GNN architectures, introducing an entropy-aware message passing term.
This term performs gradient ascent on the entropy during node aggregation, thereby preserving a certain degree of entropy in the embeddings.
We conduct a comparative analysis of our model against state-of-the-art GNNs across various common datasets.
\end{abstract}

\section{Introduction}
Graph Neural Networks have proven to be a powerful instrument for modeling relationships and dependencies in graph-structured data, making them well-suited for tasks such as recommendation systems~\cite{fan2019graph} and molecular chemistry~\cite{Wu2023}.
Despite their strengths, a significant limitation of deep GNNs is the tendency for node embeddings to collapse, a phenomenon known as "graph oversmoothing"~\cite{rusch2023survey}.
Oversmoothing leads to drastically worse performance, severely limiting the effectiveness especially of deep GNNs.
This paper addresses the oversmoothing problem by proposing an adaptation of existing GNN frameworks.
We introduce an entropy-aware message passing mechanism, which encourages the preservation of entropy in graph embeddings.

% There is no need to keep the total embedding-distance (which depends on the initialization) constant, as long as topologically distant nodes do not become too similar.
% We propose a similar approach for the graph node embeddings, but think that this could be refined using knowledge of the underlying graph topology. Since RankMe is used for self-supervised learning, it does not know about any ground-truth. In graphs, on the other hand, the 1-hop neighborhood for example tells us which nodes are probably related. We could thus think about clustering the graph and then lower-bounding the entropy between those clusters. Such an approach would not only be more suitable for tackling oversmoothing (neighboring nodes are allowed to have low entropy), but also have lower computational complexity.

% To sum up, we propose a regularization-based approach for tackling the GNN oversmoothing problem. The idea is to cluster the graph into groups of similar nodes and then use the entropy between those clusters as a side loss. The resulting regularization-term allows the embeddings of similar nodes to be freely determined by the used GNN architecture while hindering oversmoothing.

In a comparative study,~\citet{rusch2023survey} find that gradient-gating~\cite{rusch2022gradient} can effectively mitigate oversmoothing while still allowing for expressivity of the network.
However, it is noted that gradient gating may inadvertently constrain the GNN's expressiveness by rendering embeddings static prematurely, thus avoiding oversmoothing.
We believe that such a strong inductive bias may not be essential.
Intuitively, representations can be refined, allowing for the convergence of related node embeddings while ensuring that unrelated nodes remain distinct.
% Importantly, we do not believe there exists an inherent trade-off between the degree of smoothing and the expressivity of the network. Rather, we believe that both objectives can be optimized simultaneously.

While some work, besides gradient-gating, has focused on tackling the problem of oversmoothing explicitly via architectural design, other approaches like PairNorm~\cite{zhao2019pairnorm} approach the topic through a lens of regularization.
PairNorm aims to maintain a constant total sum of pairwise distances in the embedding space during transformations.
However, to maintain such a strict constraint, PairNorm sacrifices model's expressivity as shown in \cite{rusch2023survey}.
In an attempt to avoid this issue, we propose a softened version of the same idea: encouraging the entropy of the embedding not to collapse to zero. 
This bears similarity to RankMe, proposed by \cite{novikova2018rankme}, who use Shannon entropy to measure representational collapse in self-supervised learning.

More explicitly, we define an unnormalized Boltzmann distribution over the node energies (Section~\ref{sec:entropy}). We provide a closed form expression of the corresponding Shannon entropy gradient (Theorem~\ref{theorem:ds}), which facilitates a gradient ascent process in each GNN layer (Section~\ref{sec:entropic-message-passing}).

In Section~\ref{sec:results}, we evaluate our approach by comparing to a standard GCN, as well as the two baselines PairNorm~\cite{zhao2019pairnorm} and G2~\cite{rusch2022gradient}.
The evaluation is conducted on Cora~\cite{mccallum2000automating} and CiteSeer~\cite{giles1998citeseer} , as well as the 2D grid dataset introduced by \citet{rusch2023survey}.

We find that, while entropic GCN does alleviate oversmoothing similarly well as PairNorm~\cite{zhao2019pairnorm} and G2~\cite{rusch2022gradient}, it does not maintain competitive accuracy for deeper networks. Similar to the work of~\cite{rusch2023survey}, this shows that solving oversmoothing is necessary, but not sufficient for training deep networks.
%
% Further investigating the origin of oversmoothing, we experimentally find that deep graph neural networks seem to somehow compress and decompress node embeddings. We postulate that lessening compression is helpful for designing deep neural networks.
%
% We approach the problem of oversmoothing from the perspective of Graph Neural Diffusion~\cite{chamberlain2021grand}, by doing gradient ascent on the entropy $S(\mathbf X)$ during message passing.
We
\begin{itemize}[leftmargin=15pt, labelindent=0pt, itemsep=1pt, parsep=1pt, topsep=1pt, partopsep=1pt]
    \item provide a framework for existing GNN architectures that allows for entropy aware message passing by doing gradient ascent on the entropy at each layer,
    \item give a closed form expression of the entropy gradient
    \item evaluate our model against a number of baselines.
    % \item provide experimental evidence that graph neural networks seem to naturally act as data compressors and decompressors.
\end{itemize}

The code is made available at \href{https://github.com/oliver-lemke/gnn_dl}{https://github.com/oliver-lemke/entropy\_aware\_message\_passing}.

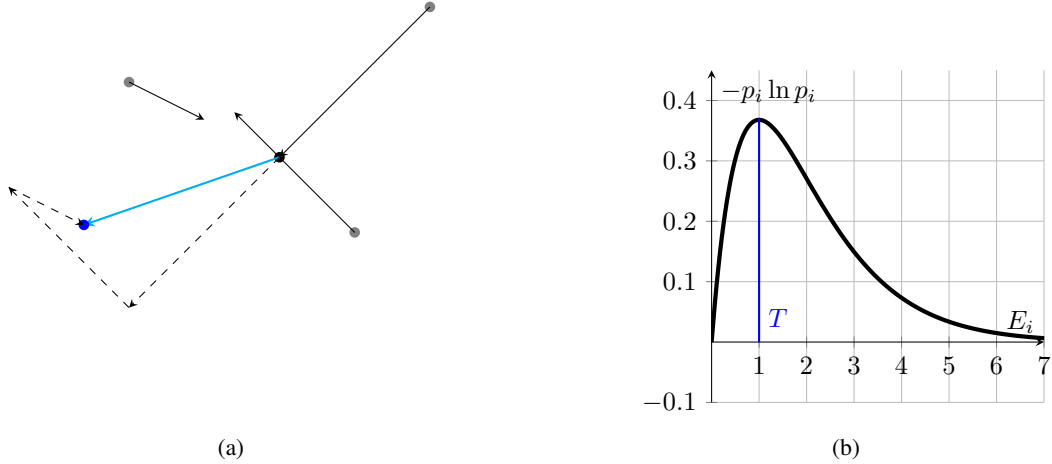
\begin{figure*}[ht]
    \centering
    \begin{subfigure}[b]{0.35\textwidth}
        \centering
        \begin{tikzpicture}
            % Points
            \coordinate (w) at (0,0);
            \coordinate (x) at (1,-1);
            \coordinate (xx) at (-0.6,0.6);  % * -3/5
            \coordinate (y) at (-2,1);
            \coordinate (yy) at (-1,0.5);  % * 0.5
            \coordinate (z) at (2,2);
        
            % Draw points
            \foreach \point in {x,y,z}
                \fill[color=gray] (\point) circle (2pt) node[above right] {};
        
            \fill[color=black] (w) circle (2pt) node[above right] {};
        
            % Draw arrows
            \draw[->, >=stealth] (x) -- (xx);
            \draw[->, >=stealth] (y) -- (yy);
            \draw[->, >=stealth] (z) -- (w); % Shortened arrow from z to w
        
            % Calculate and draw resultant vector
            \coordinate (result) at ($ -1*(x) + (xx) - (y) + (yy) -(z)$);
            \draw[->, >=stealth, thick, cyan] (w) -- (result) node[midway, above right, black] {};
        
            \fill[color=blue] (result) circle (2pt) node[above right] {};
        
            \draw[->, >=stealth, dashed] (w) -- ($-1*(z)$) node[midway, above right, black] {};
            \draw[->, >=stealth, dashed] ($-1*(z)$) -- ($-1*(x) + (xx) -1*(z)$) node[midway, above right, black] {};
            \draw[->, >=stealth, dashed] ($-1*(x) + (xx) -1*(z)$) -- (result) node[midway, above right, black] {};
        \end{tikzpicture}
        \vspace{1.5cm}
        \caption{}
        \label{fig:entropy-gradient}
    \end{subfigure}
    \hspace{2cm}
    \begin{subfigure}[b]{0.35\textwidth}
        \centering
        \begin{tikzpicture}
            \begin{axis}[
                xlabel=$E_i$,
                ylabel={$-p_i \ln p_i$},
                xmin=0, xmax=7,
                ymin=-0.1, ymax=0.45,
                grid=both,
                axis lines=middle,
                width=\textwidth,
                height=\textwidth,
                xtick={0,1,2,3,4,5,6,7},
                ytick={-0.1,0,0.1,0.2,0.3,0.4},
                xticklabels={$0$, $1$, $2$, $3$, $4$, $5$, $6$, $7$},
                yticklabels={$-0.1$, $0$, $0.1$, $0.2$, $0.3$, $0.4$},
                domain=0:7,
                samples=100,
                smooth,
                ]
                \addplot[black,ultra thick] {-exp(-x)*(-x)};
    
                % Horizontal line at x=T
                \draw[blue, thick] (axis cs:1,0.0) -- (axis cs:1,0.37);
                \node[blue] at (axis cs:1.4,0.04) {$T$};
                
            \end{axis}
        \end{tikzpicture}
        \caption{}
        \label{fig:plnp}
    \end{subfigure}
    \caption{
    Panel~(\subref{fig:entropy-gradient}) explains the effect of gradient ascent during aggregation, where a node \textcolor{black}{$\bullet$} is pushed in a direction that is a weighted superposition \textcolor{cyan}{$\rightarrow$} of the vectors pointing from its neighbors \textcolor{gray}{$\bullet$}, in a way that leads to a maximal increase in entropy.
    Panel~(\subref{fig:plnp}) shows the contribution to the entropy of a single node $i$, as a function of energy $E_i$ at temperature $T$. The contribution is maximized iff $E_i = T$.}
    \label{fig:one}
\end{figure*}

\section{Constructing An Entropy}
\label{sec:entropy}
In this section, we construct an entropy. It will be used in Section~\ref{sec:entropic-message-passing} to define the entropy aware node aggregation mechanism.

Assume a bi-directional, unweighted graph $\mathcal G = (\mathcal V, \mathcal E)$, with $n \coloneqq |\mathcal V|$ nodes.
% Define the graph Laplacian as $\mathbf L = \mathbf D - \mathbf A$, where $\mathbf D$ and $\mathbf A$ are the degree matrix and the adjacency matrix of $\mathcal G$. We write $a_{i,j} \coloneqq \mathbf A_{i,j}$ for the entries of the adjacency matrix.
% In the following, we define an entropy for our graph embedding.
%\begin{definition}
%\label{def:dirichlet-energy}
We define the Dirichlet energy of a $d$-dimensional graph embedding $\mathbf X \in \mathbb R^{n \times d}$ as
% It is common practice to define the Dirichlet energy of the graph embedding as
\begin{equation*}
    \label{eq:dirichlet-energy}
    E = \frac 1 {|\mathcal V|} \sum_{i \in \mathcal V} E_i,
\end{equation*}
where $E_i$ is the Dirichlet energy of node $i$,
\begin{align}
    \label{eq:def-ei}
    E_i & = \frac 1 {2 \sqrt {|\mathcal N_i| d}} \sum_{j \in \mathcal N_i} \| \mathbf X_j - \mathbf X_i \|_2^2.
\end{align}
In the following, we will write the normalization constant of $E_i$ as $C_i \coloneqq \frac 1 {\sqrt {|\mathcal N_i| d}}$. This normalization helps reduce variance for high dimensional embeddings~\cite{vaswani2017attention}.
%\end{definition}

Inspired by physics, we use the \textit{Boltzmann distribution} to assign an ``unnormalized probability`` $p_i$ to node $i$ having energy $E_i$ at fixed temperature $T$:
\begin{equation}
    \label{eq:boltzmann-dist}
    p_i \coloneqq e^{-E_i/T}.
\end{equation}
%\begin{definition}
%\label{def:boltzmann-dist}
%Denote by
%\begin{equation}
%    p_i = \frac 1 Z \exp \left(-\frac 1 T E_i\right)
%\end{equation} finding node $i$ to have energy $E_i$ at fixed temperature $T$, where $Z$ is a normalization constant.
The temperature $T$ is a free parameter in our setting that will allow us to regulate the entropy regularization later on. Stacking these unnormalized probabilities, we obtain a vector $\mathbf P \in \mathbb R^n$.

The entropy $S$ we consider is now given by the \textit{Shannon entropy} of this Boltzmann distribution:
\begin{equation}
    \label{eq:entropy}
    S\left(\mathbf \xt \right) \coloneqq - \mathbb E_{p_i} \left[ \ln p_i \right] = -\sum_{i \in \mathcal V} p_i \ln\left(p_i\right) \in \mathbb R.
\end{equation}
%\end{definition}
This term is maximized if and only if $E_i = T$ for all $i \in \mathcal V$. Hence, the temperature $T$ provides a way of controlling how much smoothing we allow for (see Figure~\ref{fig:plnp}).

In order to simplify terms in later expressions, we introduce the auxiliary variable
%$\bar {\mathbf P}$:
%\begin{definition}
%\label{def:pbar}
% Given a probability distribution $\mathbf P$ as in Equation~\eqref{eq:boltzmann-dist}, we define
\begin{equation}
    \label{eq:pbar}
    \overline{\mathbf P} \coloneqq \left( \mathbf P + 1 \right) \odot \ln \mathbf P \in \mathbb R^n.
\end{equation}
%\end{definition}

\section{Entropy Aware Message Passing}
\label{sec:entropic-message-passing}
% Oversmoothing corresponds to the embeddings entropy going to zero. Before explicitly constructing a notion of entropy for our application (Section~\ref{sec:entropy}), we will for now assume that we have such a construction at hand and introduce our proposed novel updated GNN layer framework.

Assume any underlying GNN architecture, e.g. message passing or attention based, which updates the embedding of node $i$ from layer $k$ to layer $k + 1$ like
\begin{equation*}
    \mathbf X_i^{(k+1)} = \mathbf X_i^{(k)} + \phi\left(\mathbf X_i^{(k)}, \mathcal N_{i}\right),
\end{equation*}
where $\phi$ is a function that is permutation invariant with respect to the neighbors $\mathcal N_{i}$ of node $i$. We propose extending such an aggregation step by performing gradient ascent on the entropy $S$:
%Instead of allowing for repulsion and compensating with dampening, we propose to ensure that the entropy $S$ of the embedding does not go to zero. We do so by making sure that as the embeddings develop through time, they walk in a direction that maximizes entropy:
%\begin{align}
%\label{eq:de-entropy}
%    \frac \partial \partialt \mathbf X_i &= \nabla \left(\mathbf G(\mathbf X_i) \nabla \mathbf X_i\right) + \nabla_{\mathbf X_i} %S\left(\mathbf X\right) \\
%    &= \sum_{j \in \mathcal N_i} \mathbf A_{i,j} (\mathbf X_j - \mathbf X_i) + \nabla_{\mathbf X_i} S\left(\mathbf X\right).
%\end{align}
%Again, to gain some intuition, let's go back to the finite element approximation, which yields the "standard GNN via stacked layers approach":
%\begin{equation*}
%    \frac{\mathbf X_i(t + \Delta t) - \mathbf X_i}{\Delta t} \approx \frac \partial \partialt \mathbf X_i =\sum_{j \in \mathcal N_i} \mathbf A_{i,j} %(\mathbf X_j - \mathbf X_i) + \nabla_{\mathbf X_i} S\left(\mathbf X\right).
%\end{equation*}
%Re-arranging and setting $\Delta t = 1$, we obtain
%\begin{equation*}
%    \mathbf X_i \to \mathbf X_i(t + \Delta t) = \mathbf X_i + \sum_{j \in \mathcal N_i} \mathbf A_{i,j} (\mathbf X_j - \mathbf X_i) + \nabla_{\mathbf X_i} S\left(\mathbf X\right).
%\end{equation*}
%Thus, we update the embedding using the usual message-passing, while also taking a step in the direction where the entropy increases most. In the following we construct an appropriate entropy.
\begin{align}
    \label{eq:de-entropy}
    \mathbf X_i^{(k+1)} &= \mathbf X_i^{(k)} + \phi\left(\mathbf X_i^{(k)}, \mathcal N_{i}\right) + \lambda  \cdot T\nabla_{\mathbf X_i} S\left(\mathbf X^{(k)}_c\right),
\end{align}
where $\lambda > 0$ is a hyperparameter.\footnote{This framework also applies in the setting of Graph Neural Diffusion, see Appendix~\ref{appendix:application-to-graph-neural-diffusion}.} The ``c`` in $S\left(\mathbf X^{(k)}_c\right)$ indicates that we evaluate the entropy on a deep copy of the embedding and thus do not back-propagate through the entropy gradient.
This strategy is adopted to reduce the computational complexity of the method, as direct backpropagation through this process would lead to excessively expensive gradient calculations.
The following theorem gives a closed-form expression for the required gradient of the entropy:

\begin{theorem}
\label{theorem:ds}
    The gradient of the Entropy $S$ with respect to $\xit$ is
    \begin{equation}
        \label{eq:grad_entropy}
        % \dxit S(\xit) = \frac 1 T \left( L \overline P \xt - \xt L \overline P \right)_i.
        \dxit S\left(\xt\right) = C_j \frac 1 T \sum_{j \in \mathcal N_i} \left( \overline{\mathbf P}_j + \overline{\mathbf P}_i \right) \left(\mathbf X_i - \mathbf X_j \right)
    \end{equation}
\end{theorem}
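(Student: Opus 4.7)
The plan is to compute $\dxit S(\xt)$ directly by term-by-term differentiation of $S = -\sum_{k \in \mathcal V} p_k \ln p_k$. First I would identify which summands actually depend on $\xit$: since $p_k = e^{-E_k/T}$ and $E_k$ only involves edges incident to $k$, the scalar $p_k$ depends on $\xit$ exactly when $k = i$ or $i \in \mathcal N_k$. Because the graph is bi-directional, $i \in \mathcal N_k$ is equivalent to $k \in \mathcal N_i$, so the sum collapses to at most $|\mathcal N_i|+1$ surviving terms: a self term at $k=i$ and one term for each neighbor.

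Next I would compute the two kinds of Dirichlet-energy gradients that arise. For $k=i$, every summand $\|\mathbf X_\ell - \mathbf X_i\|_2^2$ in $E_i$ contributes, yielding $\dxit E_i = C_i \sum_{j \in \mathcal N_i}(\mathbf X_i - \mathbf X_j)$; for $k \in \mathcal N_i$, only the single summand with $\ell = i$ survives, yielding $\dxit E_k = C_k(\mathbf X_i - \mathbf X_k)$. Combining these with the chain-rule identities $\nabla(p \ln p) = (1+\ln p)\,\nabla p$ and $\dxit p_k = -\frac{p_k}{T}\,\dxit E_k$ produces two groups of contributions, which I would then merge by collecting, for each $j \in \mathcal N_i$, the coefficient of the edge vector $\mathbf X_i - \mathbf X_j$. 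This rewrites $\dxit S$ in the form $\tfrac{1}{T}\sum_{j \in \mathcal N_i}\bigl(\alpha_i C_i + \alpha_j C_j\bigr)(\mathbf X_i - \mathbf X_j)$ with $\alpha_k \coloneqq p_k(1+\ln p_k)$. Matching $\alpha_k$ against the auxiliary variable $\overline{\mathbf P}$ defined in~\eqref{eq:pbar} then delivers the stated closed form.

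The main obstacle is not analytic but bookkeeping. One must carefully use bi-directionality to re-express the condition ``$i \in \mathcal N_k$'' as ``$k \in \mathcal N_i$'' when swapping the order of summation, and one must resist absorbing the node-dependent normalization $C_k = 1/\sqrt{|\mathcal N_k|\,d}$ into a single global prefactor, since it differs between $i$ and each neighbor. A small subtlety worth flagging is that the chain rule naturally yields the factor $p_k(1+\ln p_k)=p_k \ln p_k + p_k$, whereas~\eqref{eq:pbar} defines $\overline{\mathbf P}_k=(p_k+1)\ln p_k = p_k \ln p_k + \ln p_k$; reconciling the two in the final step is the only place where careful matching of the definitions is required, and it hints that either~\eqref{eq:pbar} or the coefficient in Theorem~\ref{theorem:ds} carries a minor typographical inconsistency that the proof will have to settle.
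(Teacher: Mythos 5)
Your proposal is correct and follows essentially the same route as the paper's own proof: term-by-term differentiation of $S$, the chain-rule identity $\nabla(p\ln p)=(1+\ln p)\,\nabla p$ together with $\dxit \mathbf P_k=-\tfrac{1}{T}\mathbf P_k\,\dxit E_k$, and the three-case computation of $\dxit E_j$ that collapses the sum to $j\in\mathcal N_i\cup\{i\}$. Your bookkeeping is in fact more careful than the paper's on exactly the two points you flag: the coefficient of $\mathbf X_i-\mathbf X_j$ really is $\tfrac{1}{T}\bigl(C_j\,\mathbf P_j(1+\ln\mathbf P_j)+C_i\,\mathbf P_i(1+\ln\mathbf P_i)\bigr)$, so the single prefactor $C_j$ pulled outside the sum in Theorem~\ref{theorem:ds} is an abuse of notation (it depends on the summation index and differs between the self term and the neighbor terms), and the definition $\overline{\mathbf P}=(\mathbf P+1)\odot\ln\mathbf P$ in~\eqref{eq:pbar} is indeed inconsistent with the quantity $\mathbf P_k(1+\ln\mathbf P_k)$ that the derivation produces and that the paper's appendix silently substitutes for it in the final step.
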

\begin{proof}
    See Appendix~\ref{section:proof-theorem}.
\end{proof}
% Before giving a precise definition of the quantities involved in Equation~\eqref{eq:grad_entropy} in the proceeding section, we want to emphasise the intuitive meaning of Theorem~\ref{theorem:ds}.

Intuitively, $\mathbf X_i - \mathbf X_j$ represents a vector pointing from the embedding of neighbor $j$ towards the embedding of node $i$. Hence, the contribution to the sum pushes node $i$ away from its neighbor $j$ if and only if $\overline{\mathbf P}_j + \overline{\mathbf P}_i > 0$.
Otherwise it pulls $\mathbf X_i$ towards $\mathbf X_j$.

It is easy to see that for any $k = 1,...,n$, it holds $\overline{\mathbf P}_k > 0$ if and only if $E_k < T$.
Whether $\mathbf X_i$ is pushed away from, or pulled towards, its neighbor $\mathbf X_j$ thus depends on which updates moves $E_i$ and $E_j$ towards $T$ most effectively.
For a visual representation of this dynamic, refer to Figure~\ref{fig:entropy-gradient}.

This argument shows that the updated message passing neither allows for converging, nor diverging embeddings. It trades of oversmoothing and divergence of embeddings.
% This is, for example, different to the negative potential update~\cite{wang2022acmp} \textcolor{red}{(i think. Will have to check again)}.

\begin{lemma}
    \label{lemma:complexity}
    The complexity of computing $\dxt S(\xt)$ is $\mathcal O(m + n)$, where $n$ represents the number of nodes and $m$ the number of edges. For sparse graphs $\mathcal G$, which is the common scenario, the complexity simplifies to $\mathcal O(n)$.
\end{lemma}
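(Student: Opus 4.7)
The plan is a two-pass amortization argument: first precompute the scalar quantities that are shared across all gradient evaluations, then assemble each per-node gradient using only local edge information. The crucial observation from Theorem~\ref{theorem:ds} is that the $i$-th gradient depends only on the globally defined vector $\overline{\mathbf P}$ from~\eqref{eq:pbar}, the normalization constants $C_i$, and the raw differences $\mathbf X_i - \mathbf X_j$ for $j \in \mathcal N_i$. Since $\overline{\mathbf P}$ does not depend on the outer index $i$, it can be computed once and reused, which is what keeps the total work linear rather than quadratic.

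For the precomputation phase, building $\{E_i\}$ from~\eqref{eq:def-ei} takes work proportional to $|\mathcal N_i|$ per node; by the handshake lemma this aggregates to $\mathcal O(m)$, treating the embedding dimension $d$ as a constant. Passing through the pointwise maps $E_i \mapsto p_i \mapsto \overline{\mathbf P}_i$ and evaluating each $C_i$ are linear in $n$, contributing $\mathcal O(n)$. Thus precomputation costs $\mathcal O(n+m)$.

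For the assembly phase, each term in the sum~\eqref{eq:grad_entropy} requires only two table lookups into $\overline{\mathbf P}$ and one vector difference, i.e.\ constant work per incident edge. Summing over $i$ and $j \in \mathcal N_i$ gives $\mathcal O(m)$ once more by the handshake lemma. Adding the two phases yields $\mathcal O(n+m)$, and substituting $m = \mathcal O(n)$ for sparse graphs specializes to $\mathcal O(n)$.

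The main obstacle is really a bookkeeping check rather than a mathematical one: if one naively recomputed $\overline{\mathbf P}$ (or the energies $\{E_j\}_{j \in \mathcal N_i}$) inside the outer loop over $i$, the complexity would inflate to $\mathcal O(nm)$. Making the caching explicit and then invoking the handshake lemma is what reduces the problem to a linear-time traversal; nothing deeper is required.
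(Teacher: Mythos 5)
Your proposal is correct and follows essentially the same route as the paper's proof: tally the cost of computing the energies ($\mathcal O(m)$), the pointwise quantities $\mathbf P$ and $\overline{\mathbf P}$ ($\mathcal O(n)$), and the final gradient assembly ($\mathcal O(m+n)$). You are somewhat more explicit than the paper in spelling out the handshake-lemma accounting and the need to cache $\overline{\mathbf P}$ to avoid an $\mathcal O(nm)$ blowup, but the underlying argument is the same.
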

\begin{proof}
    See Appendix~\ref{sec:proof-lemma-complexity}.
\end{proof}

%, we can write
%\begin{equation}
%    \label{eq:boltzmann-dist}
%    \mathbf P \coloneqq \left[p_1 , \; ... , \; p_n \right] \coloneqq \left(e^{-\frac {E_1}{T}} \right) \in \mathbb R^n.
%\end{equation}
% Here, $T$ is a \textit{temperature} parameter which controls the energy $E_i$ corresponding to maximal entropy. 
% \end{definition}
%\begin{definition}
%\label{def:entropy}

\section{Related Work}
The problem of oversmoothing has been tackled both from the perspective of regularization~\cite{zhao2019pairnorm, godwin2021simple, zhou2021dirichlet} and architectural design~\cite{rusch2022gradient, li2019deepgcns, chen2020simple}.

On the premise of architectural design, previous work has successfully adapted ResNets \cite{he2016deep} to graphs \cite{li2019deepgcns, chen2020simple}. In another work,~\citet{rusch2022gradient} try to prevent deep GNNs from oversmoothing by gating the output of each GNN layer, making sure that embeddings become stationary before they could become too similar.
In the closely related field of graph neural diffusion~\cite{chamberlain2021grand}, which our method can be adapted for,~\citet{wang2022acmp} adapt the message passing procedure to allow for a repulsive term by introducing a repulsive potential.

Taking the perspective of regularization,~\cite{zhou2021dirichlet} directly lower the effect of oversmoothing by optimizing the GNN withing a constrained range of the Dirichlet energy.

Entropy is a reoccurring quantity across different domains of machine learning. In self-supervised learning~\cite{novikova2018rankme} it is used for measuring representational collapse, in reinforcement learning~\cite{haarnoja2018soft} to support exploration. In both mentioned applications, entropy is a proxy measure for complexity, where vanishing entropy corresponds to trivial complexity.
Our entropy aware approach bears resemblance to PairNorm~\cite{zhao2019pairnorm}, who propose keeping the total sum of pairwise distances in the embedding constant. We soften this idea by not fixing the total distance, but taking a step into the direction of higher entropy.

\begin{figure}
    \centering
    \includegraphics[width=\linewidth]{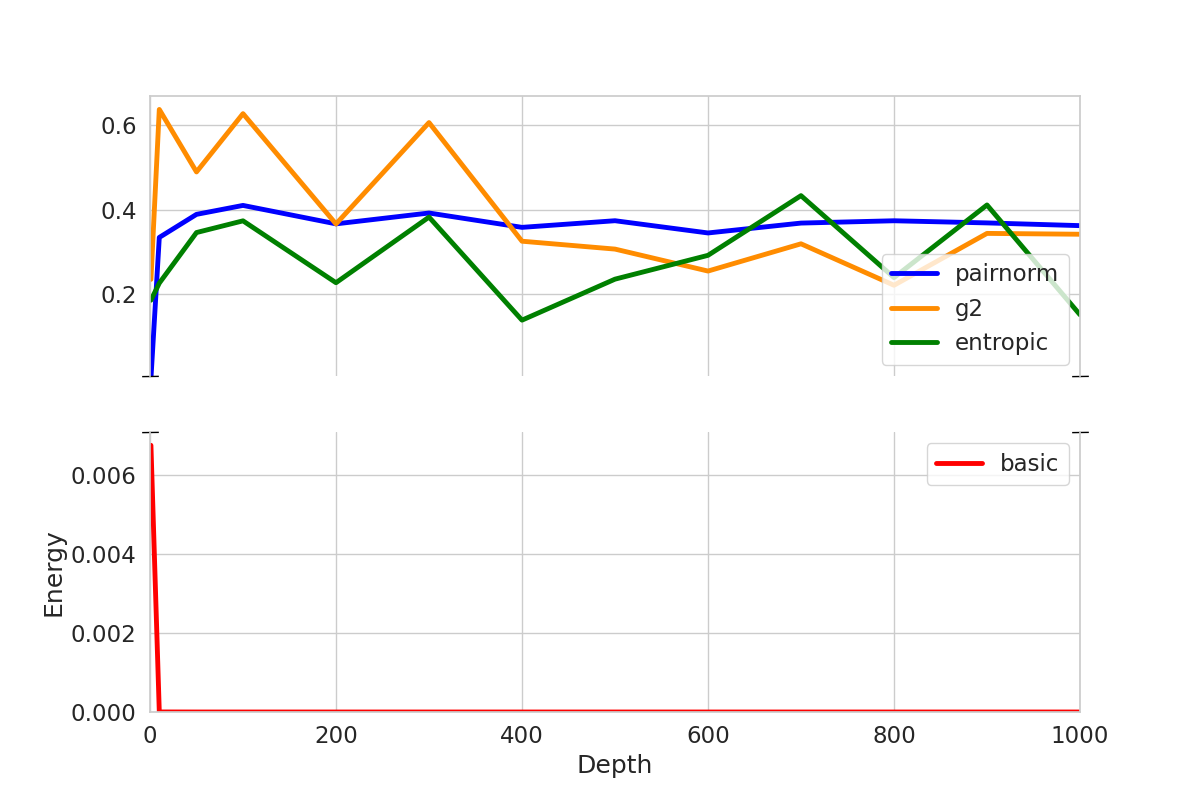}
    \caption{Energy as a function of depth evaluated on the nearest neighbor graph of a randomly initialized $10 \times 10$ grid. Entropic GCN ensures constant energy for depth up to $1000$, while basic GCN oversmooths quickly.}
    \label{fig:message-passing}
\end{figure}

\section{Results}
\label{sec:results}
\paragraph{Hyperparameter Selection in Entropic GCN.}
The Entropic GCN relies on two principal hyperparameters: the entropy gradient weight $\lambda$ (refer to Eq. \ref{eq:de-entropy}) as well as the temperature $T$ (see Eq. \ref{eq:grad_entropy}).
Following extensive hyperparameter tuning on Cora, we determined optimal values $\lambda = 1, T = 10$ for all experiments, except for training on CiteSeer, where $\lambda = 10, T = 1$ was chosen.
We observe that the selection of these hyperparameters exhibits significant sensitivity to the specific task being addressed.
To ensure a balanced comparison, we set the width of all hidden layers to 256 across all models.

As proposed by~\citet{rusch2023survey}, we evaluate our models influence on oversmoothing on an artificial toy dataset given by the nearest neighbor graph of a two-dimensional $10 \times 10$ grid with randomly initialized embedding. We evaluate the Dirichlet energy of untrained models at different depths and compare it to existing baselines. The result is shown in Figure~\ref{fig:message-passing}.
Indeed, our results confirm that, similar to PairNorm~\cite{zhao2019pairnorm} and G2~\cite{rusch2022gradient}, entropy aware message passing alleviates oversmoothing.
However, our experiments also show that solving oversmoothing is not sufficient for well performing, deep models. Training an entropic GCN on Cora~\cite{mccallum2000automating} and CiteSeer~\cite{giles1998citeseer}, Table~\ref{tab:cora} shows that, while performance does not drop as quickly as for basic GCN, the accuracy does degrade with deeper networks.
For a depth of 64 on Cora, for example, entropic GCN has approximately double the accuracy as basic GCN at $.35$ vs $.18$. 
However, for shallow networks, our model outperforms both PairNorm and G2.
For the latter we observe approximately constant accuracy of $.55$. We could not reproduce the accuracy stated in \citet{rusch2022gradient}. 

%\begin{figure}
%    \centering
%    \includegraphics[width=\linewidth]{}
%    \caption{Accuracy of node classifiers of different depths trained on Cora. While entropic GCN does slow down the drop in accuracy, it is %not competitive with SOTA models like G2 and PairNorm.}
%    \label{fig:cora}
%\end{figure}

\begin{table}[]
    \centering
    \begin{tabular}{llccccc}
        \toprule
             & Depth & 4 & 8 & 16 & 32 & 64 \\
            Dataset & Model &  &  &  &  &  \\
        \midrule
            \multirow{4}{*}{Cora} & Basic & .82 & .79 & .64 & .42 & .18 \\
             & Entropic & .81 & .79 & .71 & .41 & .35 \\
             & G2 & .54 & .55 & .55 & .56 & .56 \\
             & PairNorm & .77 & .76 & .73 & .72 & .72 \\
        \cline{1-7}
            \multirow{4}{*}{CiteSeer} & Basic & .69 & .64 & .56 & .37 & .29 \\
             & Entropic & .68 & .65 & .61 & .46 & .42 \\
             & G2 & .50 & .50 & .54 & .54 & .53 \\
             & PairNorm & .63 & .56 & .50 & .51 & .50 \\
        \cline{1-7}
        \bottomrule
    \end{tabular}
    \caption{Accuracy of node classifiers of different depths trained on Cora. While entropic GCN does slow down the drop in accuracy, it is not competitive with SoTA models like G2 and PairNorm.}
    \label{tab:cora}
\end{table}

To further investigate where oversmoothing is happening, we plot the energy at each layer of trained models in Figure~\ref{fig:u-shape} (``U-curves``). Basic GCN has high energy in early and late layers ($E \approx 6$), and low energy in the remaining middle layers ($E \approx 0$).
Furthermore, we observe that the minimum energy for basic GCN drops to $0.02$, while entropic GCN consistently maintains an energy above $0.09$.
The U-curve of PairNorm is flatter than that of basic and entropic GCN, and for G2 it even increases monotonically. Both models alleviate oversmoothing (Figure~\ref{fig:message-passing}) and maintain good performance for up to $64$ layers (Table~\ref{tab:cora}). 
% Consequently, we propose that graph neural networks could exhibit characteristics akin to encoder-decoder type architectures, compressing node embeddings and then decompressing them again. We do not make any claim about how exactly this compression works, i.e. if it is just a contraction or whether the intrinsic dimensionality changes.
Consequently, it seems like oversmoothing is predominantly happening in intermediate layers, which explains why entropic GCN is not competitive with state of the art models. However, compared to basic GCN, our model recovers the energy more rapidly in the last layers. % explaining why it significantly outperforms basic GCN.
% and ``flattening the U-curve`` seems to be necessary for reducing oversmoothing while simultaneously maintaining good accuracy.

\begin{figure}
    \centering
    \includegraphics[width=\linewidth]{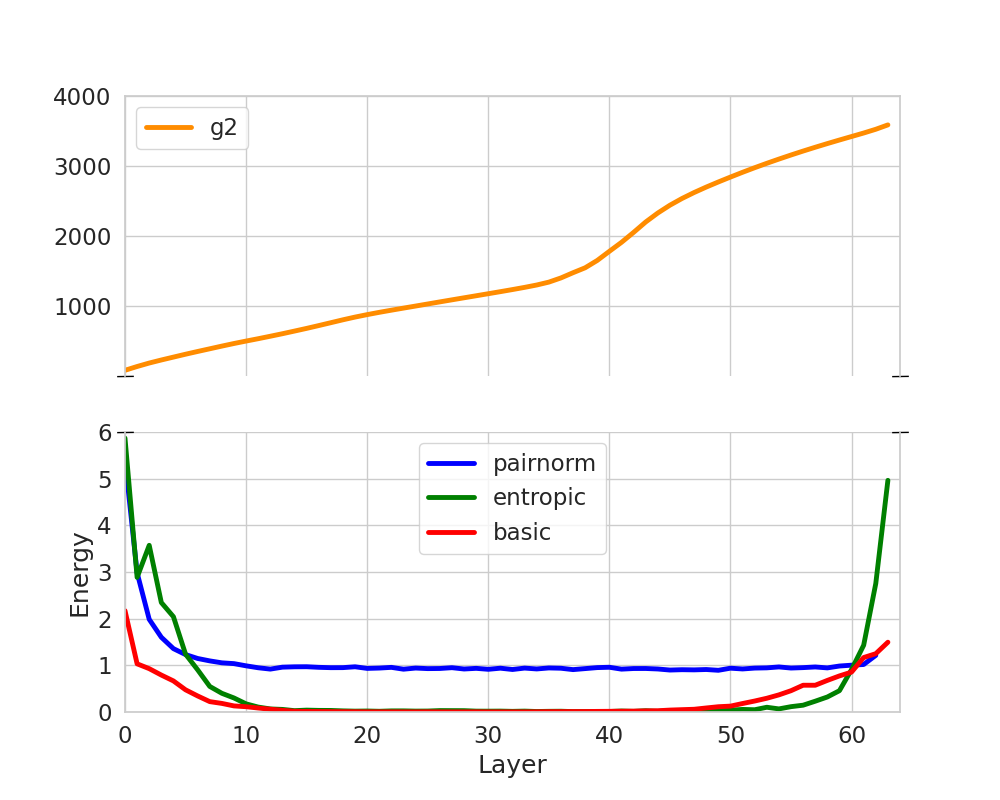}
    \caption{Energy at each layer for models trained on Cora. The U-shape of the basic GCN's curve could suggest that graph neural networks oversmooth in intermediate layers.}
    \label{fig:u-shape}
\end{figure}

\section{Discussion}
\textbf{Summary} In this work, we propose tackling the problem of oversmoothing in graph neural networks by performing a gradient ascent step on the entropy at every layer.
% Explicitly, we define the entropy to be the Shannon entropy of the unnormalized Boltzmann distribution over the node energies at fixed temperature.
One major strength of this approach is its flexibility, as it independent of both architecture and loss function of the original model.

% is purely architectural and, even though it contains the gradient of a function, unrelated to the choice of loss function.

Our experiments on message passing in untrained models show that entropy aware node aggregation does alleviate oversmoothing comparably well to existing models.
Concerning classification performance, we see that accuracy of entropic GCN drops for deep networks, though it improves significantly upon basic GCN.

%Further investigations into the origin of oversmoothing show that graph neural networks tend to oversmooth in intermediate layers, which we derive from the fact that the energy of a fully trained GCN drops significantly after the first few layers, and increases again in the last few.
% Based on these considerations, we provide experimental evidence that existing state of the art really alleviate oversmoothing in intermediate layers, hindering the energy from dropping too low before eventually increasing again.

In shallow networks, entropic GCN maintains similar accuracy as Basic GCN.
This shows that we achieve our goal of relaxing the hard constraints of PairNorm and G2 without sacrificing expressivity.
However, our approach does not completely solve oversmoothing as indicated by poor accuracy of deeper networks, possibly stemming from low energy levels in intermediate layers.
A possible explanation is that the weight of entropy ascent is not large enough at these stages.

\textbf{Future Work}
One possible line of work would be the development of a weight scheduler that increases $\lambda$ whenever the entropy of the model decreases.
This might enhance performance by assigning greater importance to entropy ascent in intermediate layers, potentially flattening the U-curve.

Further investigation into the causes behind the U-shaped energy curves observed in GNNs also presents a promising research direction.
Deeper understanding this phenomenon could help design novel graph neural network architectures.

Finally, the gradient ascent step on the entropy is currently detached from the computational graph. A more efficient implementation would enable the entropy step to actively contribute gradients to the node embeddings.

% In the unusual situation where you want a paper to appear in the
% references without citing it in the main text, use \nocite
\clearpage\newpage

\bibliography{bibliography}
\bibliographystyle{icml2023}

%%%%%%%%%%%%%%%%%%%%%%%%%%%%%%%%%%%%%%%%%%%%%%%%%%%%%%%%%%%%%%%%%%%%%%%%%%%%%%%
%%%%%%%%%%%%%%%%%%%%%%%%%%%%%%%%%%%%%%%%%%%%%%%%%%%%%%%%%%%%%%%%%%%%%%%%%%%%%%%
% APPENDIX
%%%%%%%%%%%%%%%%%%%%%%%%%%%%%%%%%%%%%%%%%%%%%%%%%%%%%%%%%%%%%%%%%%%%%%%%%%%%%%%
%%%%%%%%%%%%%%%%%%%%%%%%%%%%%%%%%%%%%%%%%%%%%%%%%%%%%%%%%%%%%%%%%%%%%%%%%%%%%%%
\newpage
\appendix
\onecolumn

\section{Proofs}
\subsection{Proof of Theorem~\ref{theorem:ds}}
\label{section:proof-theorem}
Fix $i,j$ in $\{1,...,n\}$. Our goal is to compute $\dxit \left(\mathbf P_j \ln \mathbf P_j \right)$, so the derivative of the $j$'th summand in the entropy (Equation~\eqref{eq:entropy}) with respect to the $i$'th nodes embedding.
The chain rule gives
\begin{equation}
\label{eq:dxitpjlnpj}
    \dxit \left(\mathbf P_j \ln \mathbf P_j \right) = \mathbf P_j \dxit \ln \mathbf P_j + \dxit \mathbf P_j \ln \mathbf P_j.
\end{equation}
We will look at each of the derivatives separately. For ease of notation, write $e_j = \exp\left(-\frac 1 T E_j \right)$, so that $\mathbf P_j = \frac 1 Z e_j$ (Equation~\ref{eq:boltzmann-dist}).

We start by formulating some helping lemmas that calculate some necessary fundamental derivatives:

\begin{lemma}
    \label{lemma:dxitej}
    We have that
    \begin{equation}
        \dxit e_j = - \frac 1 T e_j \dxit E_j.
    \end{equation}
\end{lemma}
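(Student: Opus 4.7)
The plan is to recognize this lemma as a direct one-line application of the chain rule for a scalar composition. By definition $e_j = \exp\bigl(-\tfrac{1}{T} E_j\bigr)$ is a scalar quantity that depends on the embedding $\mathbf{X}$ only through the Dirichlet energy $E_j(\mathbf{X})$. I would view $e_j$ as the composition of the one-variable map $u \mapsto e^u$ with the scalar-valued map $\mathbf{X}_i \mapsto -\tfrac{1}{T}\, E_j(\mathbf{X})$. Differentiating, the outer factor evaluated at $u = -E_j/T$ is simply $\exp(-E_j/T) = e_j$, while the inner factor, using that $T$ is a constant independent of $\mathbf{X}_i$, equals $-\tfrac{1}{T}\, \dxit E_j$. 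Multiplying these two factors gives
\begin{equation*}
    \dxit e_j \;=\; \exp\!\bigl(-\tfrac{1}{T} E_j\bigr)\cdot \dxit\!\left(-\tfrac{1}{T} E_j\right) \;=\; -\tfrac{1}{T}\, e_j\, \dxit E_j,
\end{equation*}
which is exactly the claim.

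There is no real obstacle here; the computation is routine bookkeeping. Two small points are worth noting. First, the partition function $Z$ does not enter, because the lemma is deliberately stated for the unnormalized weight $e_j$ rather than for the Boltzmann probability $\mathbf{P}_j$; the $Z$-dependence will have to be handled separately in a later step of the proof of Theorem~\ref{theorem:ds}. Second, from the definition in Equation~\eqref{eq:def-ei} one sees that $\dxit E_j$ vanishes whenever $i \notin \mathcal{N}_j \cup \{j\}$, so the resulting gradient is automatically sparse; this sparsity is not needed here but will be useful when deriving the closed form in Equation~\eqref{eq:grad_entropy}. The explicit evaluation of $\dxit E_j$ itself would be the content of a subsequent helper lemma rather than of this one.
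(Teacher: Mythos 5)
Your proof is correct and takes the same (only sensible) route as the paper, which simply states that the lemma ``follows directly from the definition''---i.e., the chain rule applied to $e_j = \exp(-E_j/T)$. Your additional remarks about the role of $Z$ and the sparsity of $\dxit E_j$ are accurate and consistent with how the paper handles those points in its subsequent lemmas.
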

\begin{proof}
    Follows directly from the definition.
\end{proof}

\begin{lemma}
\label{lemma:dxitpj}
    It holds true that
    \begin{equation}
        \dxit \mathbf P_j = \mathbf P_j\left( - \frac 1 T \dxit E_j - \frac 1 Z \dxit Z \right).
    \end{equation}
\end{lemma}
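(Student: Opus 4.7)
The plan is to differentiate $\mathbf{P}_j = e_j/Z$ directly via the quotient rule (or equivalently, write $\mathbf{P}_j = e_j \cdot Z^{-1}$ and apply the product rule), then substitute the expression for $\dxit e_j$ from the preceding Lemma~\ref{lemma:dxitej}, and finally factor out a common $\mathbf{P}_j$ to recover the claimed form.

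Concretely, I would first write
\begin{equation*}
    \dxit \mathbf{P}_j \;=\; \dxit\!\left(\frac{e_j}{Z}\right) \;=\; \frac{\dxit e_j}{Z} \;-\; \frac{e_j}{Z^2}\,\dxit Z.
\end{equation*}
Next, I would invoke Lemma~\ref{lemma:dxitej} to replace $\dxit e_j$ by $-\tfrac{1}{T} e_j \dxit E_j$, obtaining
\begin{equation*}
    \dxit \mathbf{P}_j \;=\; -\frac{1}{T}\cdot\frac{e_j}{Z}\,\dxit E_j \;-\; \frac{e_j}{Z}\cdot\frac{1}{Z}\,\dxit Z.
\end{equation*}
Since $\mathbf{P}_j = e_j/Z$, both summands carry a factor of $\mathbf{P}_j$, and pulling it out yields exactly the stated identity.

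There is no real obstacle here: the lemma is a bookkeeping step, and the hard work is deferred to how $\dxit E_j$ and $\dxit Z$ are subsequently computed (which will presumably be the content of the next auxiliary lemmas) before assembling everything into the closed form of Theorem~\ref{theorem:ds}. The only mild subtlety worth flagging is that the derivative $\dxit E_j$ is nontrivial only when $i = j$ or $j \in \mathcal{N}_i$, so downstream the sums over $j$ will effectively collapse to a sum over neighbors; but at the level of this lemma that structure is not yet needed, and the computation is purely an application of the quotient rule together with Lemma~\ref{lemma:dxitej}.
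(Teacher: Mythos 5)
Your proposal is correct and follows essentially the same route as the paper: the paper likewise expands $\dxit \mathbf P_j = \dxit\left(\frac 1 Z e_j\right)$ by the product rule, substitutes $\dxit e_j = -\frac 1 T e_j \dxit E_j$ from Lemma~\ref{lemma:dxitej}, and factors out $\mathbf P_j = e_j/Z$ from both terms. The quotient-rule phrasing is an immaterial difference, and your remark that the case analysis for $\dxit E_j$ is deferred to a later lemma matches the paper's structure.
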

\begin{proof}
    Follows from the chain- and product rules, together with Lemma~\ref{lemma:dxitej}:
    \begin{align*}
        \dxit \mathbf P_j &= \dxit \frac 1 Z e_j \\
        &= \frac 1 Z \dxit e_j + (\dxit \frac 1 Z) e_j \\
        &= -\frac 1 Z \frac 1 T e_j \dxit E_j - e_j \frac 1 {Z^2} \dxit Z \\
        &= -\frac 1 T \mathbf P_j \dxit E_j - \frac 1 Z \mathbf P_j \dxit Z.
    \end{align*}
\end{proof}

\begin{lemma}
    \label{lemma:dxitlnpn}
    It holds true that
    \begin{equation}
        \dxit \ln \mathbf P_j = - \frac 1 T \dxit E_j - \frac 1 Z \dxit Z.
    \end{equation}
    \begin{proof}
        Note that $\ln \mathbf P_j = - \ln Z + \ln e_j = - \ln Z - \frac 1 T E_j$. We thus calculate
        \begin{align*}
            \dxit \ln \mathbf P_j &= \dxit \left(-\ln Z - \frac 1 T E_j\right) \\
            &= - \frac 1 Z \dxit Z - \frac 1 T \dxit E_j.
        \end{align*}
    \end{proof}
\end{lemma}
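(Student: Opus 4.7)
The plan is to prove Lemma~\ref{lemma:dxitlnpn} by a direct chain-rule computation, starting from the factorization $\mathbf P_j = \frac{1}{Z} e_j$ with $e_j = \exp(-E_j/T)$ introduced at the top of the appendix. The key observation is that $\ln$ turns this product/quotient into a sum, which makes the derivative split cleanly into a $Z$-piece and an $E_j$-piece without ever needing to re-differentiate the exponential.

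Concretely, I would first take the logarithm of $\mathbf P_j = Z^{-1} e_j$ to obtain
\begin{equation*}
    \ln \mathbf P_j \;=\; -\ln Z + \ln e_j \;=\; -\ln Z - \tfrac{1}{T} E_j,
\end{equation*}
using that $\ln e_j = \ln \exp(-E_j/T) = -E_j/T$. Then I would apply $\dxit$ to both sides. Since $\ln Z$ depends on $\xit$ only through $Z$, the chain rule gives $\dxit \ln Z = Z^{-1} \dxit Z$. The second term is linear in $E_j$, so $\dxit\bigl(-\tfrac{1}{T} E_j\bigr) = -\tfrac{1}{T}\,\dxit E_j$. Summing these two contributions yields exactly the claimed expression.

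The only subtlety worth flagging is that both $Z$ and $E_j$ genuinely depend on $\xit$ (through the neighbors of $i$ inside $E_j$, and through all nodes inside $Z = \sum_k e_k$), so one should not mistakenly treat either as a constant. No obstacle arises beyond correctly applying the chain rule to $\ln Z$; the rest is a one-line algebraic identity. Notably, this lemma does not require the more delicate calculation of $\dxit E_j$ or $\dxit Z$ themselves — those are left as symbolic placeholders and expanded only when assembling the final gradient of $S$ in Theorem~\ref{theorem:ds}.
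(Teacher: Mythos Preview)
Your proposal is correct and follows essentially the same approach as the paper: take $\ln$ of the factorization $\mathbf P_j = Z^{-1} e_j$ to get $\ln \mathbf P_j = -\ln Z - \tfrac{1}{T}E_j$, then apply $\dxit$ term-by-term using the chain rule on $\ln Z$. Your added remarks about the dependence of $Z$ and $E_j$ on $\xit$ and about deferring their explicit gradients are accurate and consistent with how the paper structures the overall argument.
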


We now come to a bit less trivial lemma.
\begin{lemma}
    \begin{equation}
    \label{eq:dxitej}
    \nabla_{\mathbf{X_i}} E_j =
    \begin{cases}
    C_j \left(\mathbf X_i - \mathbf X_j\right) & i \in \mathcal N_j \\
    - C_j\sum_{k \in \mathcal N_i} \left(\mathbf X_k - \mathbf X_i\right) & i = j \\
    0 & \text{otherwise}.
    \end{cases}
\end{equation}
\end{lemma}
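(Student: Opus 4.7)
The plan is to prove this by a direct case analysis on the role that $\mathbf{X}_i$ plays in the defining expression
\begin{equation*}
    E_j = \frac{C_j}{2}\sum_{k \in \mathcal{N}_j}\|\mathbf{X}_k - \mathbf{X}_j\|_2^2.
\end{equation*}
For a fixed pair $(i,j)$, the variable $\mathbf{X}_i$ can enter this sum in exactly one of three disjoint ways: as the central variable $\mathbf{X}_j$ (i.e.\ $i=j$), as one of the neighbor variables $\mathbf{X}_k$ with $k=i$ (i.e.\ $i\in\mathcal{N}_j$ and $i\neq j$), or not at all. I would open the proof by recording this trichotomy, noting that the standing assumption of an undirected simple graph (no self-loops) guarantees these cases actually partition the index set.

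First I would treat $i\in\mathcal{N}_j$, $i\neq j$. Only the single summand indexed by $k=i$ depends on $\mathbf{X}_i$, and a one-line computation of $\nabla_{\mathbf{X}_i}\|\mathbf{X}_i - \mathbf{X}_j\|_2^2 = 2(\mathbf{X}_i-\mathbf{X}_j)$ combined with the prefactor $C_j/2$ gives the first branch. Next I would handle $i=j$: now $\mathbf{X}_j$ appears as the subtrahend in every one of the $|\mathcal{N}_j|$ summands, so differentiating each term produces $-2(\mathbf{X}_k-\mathbf{X}_j)$, which after multiplying by $C_j/2$ and summing yields $-C_j\sum_{k\in\mathcal{N}_j}(\mathbf{X}_k-\mathbf{X}_j)$; since $i=j$ this is exactly the middle branch of the claimed expression. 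The final case, $i\neq j$ with $i\notin\mathcal{N}_j$, is immediate: no summand of $E_j$ contains $\mathbf{X}_i$, so the gradient vanishes.

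There is no substantive obstacle; the only real care required is the book-keeping of which role $\mathbf{X}_i$ plays, because a priori one might worry it could appear simultaneously as center and as neighbor. The key observation that makes the case split clean is precisely the no-self-loop assumption, which separates the first two cases; I would flag this explicitly. Everything else reduces to the standard derivative of a squared Euclidean norm, applied either once (first case) or once per neighbor (second case).
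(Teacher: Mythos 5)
Your proposal is correct and follows essentially the same route as the paper's own proof: the same trichotomy on the role of $\mathbf X_i$ in the sum defining $E_j$ (neighbor, center, or absent), justified by the no-self-loop assumption, followed by the standard derivative of a squared Euclidean norm in each case. Nothing further is needed.
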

\begin{proof}
Remember the Equation~\eqref{eq:dirichlet-energy} of the energy $E_j$ related to node $j$:
\begin{align*}
    E_j = \frac 1 2 C_j \sum_{k \in \mathcal N_j} \| \mathbf X_k - \mathbf X_j \|_2^2.
\end{align*}
We compute
\begin{align}
    \dxit E_j &= \frac 1 2 C_j\sum_{k \in \mathcal N_j} \dxit \| \mathbf X_k - \mathbf X_j \|_2^2.
\end{align}
In order to calculate the derivative, we need to differentiate three mutually exclusive cases (note that we do not allow self-loops). We could have that that $i \in \mathcal N_j$, that $i = j$ or neither of both. Assume that $i \in \mathcal N_j$. Then
\begin{equation*}
    \frac 1 2 \sum_{k \in \mathcal N_j} \dxit \| \mathbf X_k - \mathbf X_j \|_2^2 = \mathbf X_i - \mathbf X_j,
\end{equation*}
which is the first case of Equation~\ref{eq:dxitej}.
Next, assume that $i = j$. We compute
\begin{equation*}
    \frac 1 2 \sum_{k \in \mathcal N_j} \dxit \| \mathbf X_k - \mathbf X_j \|_2^2 = - \sum_{k \in \mathcal N_i} \left(\mathbf X_k - \mathbf X_i\right),
\end{equation*}
which shows the second case of Equation~\ref{eq:dxitej}. Finally, if neither of those cases are true, the derivative vanishes.
\end{proof}

With the help of these Lemmas, we can formulate a more high-level Proposition that is a big step towards proving Theorem~\ref{theorem:ds}:
\begin{proposition}
\label{proposition:dxitpjlnpj}
    \begin{equation}
        \dxit \mathbf P_j \ln \mathbf P_j = \left(1 + \ln \mathbf P_j\right) \mathbf P_j \left(-\frac 1 T \dxit E_j \right).
    \end{equation}
\end{proposition}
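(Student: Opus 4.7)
The plan is to build directly on the product-rule expansion of $\mathbf P_j \ln \mathbf P_j$ already recorded in Equation~\eqref{eq:dxitpjlnpj} and then to substitute the closed-form expressions for $\dxit \mathbf P_j$ and $\dxit \ln \mathbf P_j$ supplied by Lemma~\ref{lemma:dxitpj} and Lemma~\ref{lemma:dxitlnpn}. Since all the non-trivial calculus has been isolated into those two lemmas, the proof at this stage reduces to careful bookkeeping rather than any analytic work.

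First I would rewrite the right-hand side of \eqref{eq:dxitpjlnpj} by plugging in the two substitutions. The first summand $\mathbf P_j \dxit \ln \mathbf P_j$ turns into $\mathbf P_j\bigl(-\tfrac{1}{T}\dxit E_j - \tfrac{1}{Z}\dxit Z\bigr)$, and the second summand $(\dxit \mathbf P_j)\ln\mathbf P_j$ turns into $\mathbf P_j\bigl(-\tfrac{1}{T}\dxit E_j - \tfrac{1}{Z}\dxit Z\bigr)\ln\mathbf P_j$. The same factor $\mathbf P_j\bigl(-\tfrac{1}{T}\dxit E_j - \tfrac{1}{Z}\dxit Z\bigr)$ now appears in both terms, so I can pull it out of the sum and collect the residual factor $1 + \ln \mathbf P_j$.

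The final step is to observe that the Boltzmann weights defined in Equation~\eqref{eq:boltzmann-dist} are unnormalized, i.e.\ $Z \equiv 1$, so $\dxit Z$ vanishes identically and the $-\tfrac{1}{Z}\dxit Z$ contribution inside the parentheses drops out. What remains is exactly the claimed $(1+\ln\mathbf P_j)\mathbf P_j\bigl(-\tfrac{1}{T}\dxit E_j\bigr)$. I do not anticipate any real obstacle; the only subtlety is remembering that the paper works with unnormalized $\mathbf P_j$, so the partition-function terms carried through the earlier lemmas for full generality cleanly collapse here.
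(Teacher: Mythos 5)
Your proof is correct and follows essentially the same route as the paper, whose own proof is just the one-line instruction to insert Lemma~\ref{lemma:dxitpj} and Lemma~\ref{lemma:dxitlnpn} into Equation~\eqref{eq:dxitpjlnpj} and factor out $1+\ln\mathbf P_j$. Your explicit observation that the $-\tfrac{1}{Z}\dxit Z$ terms vanish because the weights in Equation~\eqref{eq:boltzmann-dist} are unnormalized (so $Z$ is constant) is exactly the detail the paper's terse proof leaves implicit, and it is needed for the stated formula to hold.
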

\begin{proof}
    Insert the results of Lemma~\ref{lemma:dxitpj} and Lemma~\ref{lemma:dxitlnpn} into Equation~\eqref{eq:dxitpjlnpj}.
    %yields
    %\begin{equation*}
    %    \dxit \mathbf P_j \ln \mathbf P_j = \left(1 + \ln \mathbf P_j\right) \mathbf P_j \left(-\frac 1 T \dxit E_j \right).
    %\end{equation*}
    %The claim then follows using Lemma~\ref{lemma:dxitz}.
\end{proof}

We are now finally ready to prove Theorem~\ref{theorem:ds}.
\begin{proof}[Proof of Theorem~\ref{theorem:ds}]
We are interested in
\begin{align*}
    \dxit S(\xt) = -\sum_{j \in \mathcal V} \dxit \mathbf P_j \ln \mathbf P_j.
\end{align*}
Proposition~\ref{proposition:dxitpjlnpj} tells gives us an expression for each of the summands.

Note that
\begin{equation}
\label{eq:2}
    \sum_{j \in \mathcal V} \left(1 + \ln \mathbf P_j\right) \mathbf P_j \left(-\frac 1 T \dxit E_j\right) = - \frac 1 T \sum_{j \in \mathcal N_i \cup i}  \left(1 + \ln \mathbf P_j\right) \mathbf P_j \dxit E_j,
\end{equation}
using Lemma~\ref{lemma:dxitej}.

We thus obtain
\begin{align}
    \dxit S\left(\xt\right) &= \frac 1 T \sum_{j \in \mathcal N_i \cup i} \mathbf P_j \dxit E_j \left(1 + \ln \mathbf P_j)\right)
\end{align}

The last step requires us to replace the remaining gradient $\dxit E_j$ using Lemma~\ref{lemma:dxitej}, using one contribution coming from $i$ and the rest coming from $\mathcal N_j$. Using the fact that $i \in \mathcal N_j$ if and only if $j \in \mathcal N_i$, the latter becomes
\begin{align}
\label{eq:3}
    \frac 1 T \sum_{j \in \mathcal N_i} \mathbf P_j \dxit E_j \left(1 +  \ln \mathbf P_j)\right) = C_j \frac 1 T \sum_{j \in \mathcal N_i} \mathbf P_j \left(1 +  \ln \mathbf P_j)\right) \left(\mathbf X_i - \mathbf X_j\right).
\end{align}
In the remaining case for the contribution coming from $i$, we calculate
\begin{align}
\label{eq:4}
    \frac 1 T \mathbf P_i \dxit E_i \left(1 +  \ln \mathbf P_j)\right) = - C_j\frac 1 T \mathbf P_i \left(1 +  \ln \mathbf P_i)\right) \sum_{j \in \mathcal N_i} \left(\mathbf X_j - \mathbf X_i\right)
\end{align}

Finally, summing over Equations~\eqref{eq:3} and~\eqref{eq:4} gives
\begin{align*}
    \dxit S\left(\xt\right) &= C_j \frac 1 T \sum_{j \in \mathcal N_i} \left(\mathbf X_i - \mathbf X_j\right) \left( \mathbf P_j (1 + \ln \mathbf P_j) + \mathbf P_i (1 + \ln \mathbf P_i) \right).
\end{align*}
Remembering Equation~\eqref{eq:pbar}, which says that $\overline{\mathbf P}_j \coloneqq \mathbf P_j (1 + \ln \mathbf P_j)$, we can rewrite this as
\begin{align}
    \dxit S\left(\xt\right) &= C_j \frac 1 T \sum_{j \in \mathcal N_i} \left(\mathbf X_i - \mathbf X_j\right) \left( \overline{\mathbf P}_j + \overline{\mathbf P}_i \right).
\end{align}

\end{proof}

\subsection{Proof of Lemma~\ref{lemma:complexity}}
\label{sec:proof-lemma-complexity}
    We go through all the required computations step-by-step. First, calculating the energies $E_j$, $j = 1,...,n$, is $\mathcal O(m)$ if $m$ is the number of edges in the graph, which is bounded by $n^2$. Typically, for example in social networks, graphs are sparse, for example $m = \mathcal O(2n)$. Computing $\overline {\mathbf P} = \left(\mathbf P - H \right) \odot \ln \mathbf P$ is also $\mathcal O(n)$. And so is computing $S(\xt) = \sum_{i=1}^n \mathbf P_j \ln \mathbf P_j$. Finally, computing the derivative derived in Proposition~\ref{proposition:dxitpjlnpj} is $\mathcal O(m + n)$, typically $\mathcal O(n)$.

\section{Application To Neural Graph Diffusion}
\label{appendix:application-to-graph-neural-diffusion}
\cite{chamberlain2021grand} interpret message passing neural networks as discrete solutions of a diffusion equation. Concretely, they assume a node $i$ to have (time-dependant) embedding $\mathbf X_i(t) \in \mathbb R^d$. Stacking these representations yields $\mathbf X(t) \in \mathbb R^{n \times d}$.

In this setting, some message passing networks can be thought of solving the differential equation
\begin{equation*}
\label{eq:diffusion}
    \frac \partial \partialt \mathbf X_i(t) = \nabla \left(\mathbf G(\mathbf X_i(t)) \nabla \mathbf X_i(t))\right),
\end{equation*}
where $\mathbf G(\mathbf X(t)) = \diag \left( \mathbf A_{i,j}\right)$ with attention weights $\mathbf A_{i,j} = a(\mathbf X_i, \mathbf X_j)$ and divergence as well as gradient are defined as usually for graphs.

\iffalse
We can think of $\frac \partial \partialt \mathbf X_i(t)$ as the continuous generalization of an update step performed by a classical GNN layer. If we approximate the term using finite elements, it becomes
\begin{equation*}
    \frac{\mathbf X_i(t + \Delta t) - \mathbf X_i(t)}{\Delta t} \approx \frac \partial \partialt \mathbf X_i(t) = \nabla \left(\mathbf G(\mathbf X_i(t)) \nabla \mathbf X_i(t))\right).
\end{equation*}
Setting $\Delta t = 1$, we get back the usual update rule.

Indeed, one can see that the diffusion equation corresponds to message passing by writing out equation~\eqref{eq:diffusion}:
\begin{equation*}
    \frac \partial \partialt \mathbf X_i(t) = \sum_{j \in \mathcal N_i} \mathbf A_{i,j} (\mathbf X_j - \mathbf X_i).
\end{equation*}
Since attention-weights are positive, this means that $\mathbf X_i$ is attracted by $\mathbf X_j$.

\citet{wang2022acmp} also allow for repulsive forces by introducing a bias-term:
\begin{equation*}
        \frac \partial \partialt \mathbf X_i(t) = \sum_{j \in \mathcal N_i} \left(\mathbf A_{i,j} - b_{i,j} \right) (\mathbf X_j - \mathbf X_i).
\end{equation*}
In order to not rule out possible divergence of the embedding due to the repulsive term, they furthermore add a damping term $\epsilon \cdot \mathbf X_i \odot \left(1 - \mathbf X_i \odot \mathbf X_i \right)$.
\fi

In practice, this differential equation is solved discretely, where every discrete time-step would loosely correspond to a layer in traditional GNNs. During training, one uses a traditional loss to tune the hyperparameters of the attention.

An entropy aware graph neural diffusion model would thus be given by the solution of the following differential equation:
\begin{equation}
    \frac \partial \partialt \mathbf X_i(t) = \nabla \left(\mathbf G(\mathbf X_i(t)) \nabla \mathbf X_i(t))\right) + \lambda \cdot \nabla_{\mathbf X_i(t)} S(\mathbf X(t)).
\end{equation}

%I think that, after some calculations, we obtain
%\begin{equation}
%    \nabla_{\mathbf{X_i}} S \left(\mathbf X \right)= \frac 1 T \sum_{j \in \mathcal N_i \cup i} p_j \left(H - \ln p_j \right) \nabla_{\mathbf X_i} E_j,
    %\frac 1 T \sum_{j \in \mathcal N_i \cup i} \left(p_j + \ln p_j\right)p_j \nabla_{\mathbf{X_i}} E_j + \frac 1 T \left(\sum_{j \in \mathcal V} \left(p_j + \ln p_j\right)p_j\right) \cdot \sum_{k \in \mathcal N_i \cup i} \mathbf P_k \nabla_{\mathbf{X_i}} E_k,
%\end{equation}
%where

%%%%%%%%%%%%%%%%%%%%%%%%%%%%%%%%%%%%%%%%%%%%%%%%%%%%%%%%%%%%%%%%%%%%%%%%%%%%%%%
%%%%%%%%%%%%%%%%%%%%%%%%%%%%%%%%%%%%%%%%%%%%%%%%%%%%%%%%%%%%%%%%%%%%%%%%%%%%%%%

\end{document}